\newcommand{\todo}[1]{}
\renewcommand{\chi}[1]{{\bf \color{blue} CHI: #1}}
\newcommand{\declarecolor}[2]{\definecolor{#1}{RGB}{#2}\expandafter\newcommand\csname #1\endcsname[1]{\textcolor{#1}{##1}}}
\def\expec#1#2{{\mathbb{E}}_{#1}\left[ #2 \right]}
\newcommand{\SC}{\textsc{SchurComplement}\xspace}
\newcommand{\RC}{\textsc{RandomContraction}\xspace}
\newcommand{\diag}{\textrm{diag}\xspace}
\newcommand{\blogcatalog}{BlogCatalog\xspace}
\newcommand{\flickr}{Flickr\xspace}
\newcommand{\youtube}{YouTube\xspace}
\newcommand{\NetMF}{NetMF\xspace}
\newcommand{\LINE}{LINE\xspace}
\newcommand{\deepwalk}{DeepWalk\xspace}
\newcommand{\mat}[1]{\mathbf{#1}}
\newcommand{\trunclog}{\textrm{log}^{+}}
\newcommand{\SCf}{\mathbf{SC}\xspace}
\theoremstyle{plain}
\newtheorem{theorem}{Theorem}[section]
\newtheorem{fact}[theorem]{Fact}
\newtheorem{lemma}[theorem]{Lemma}
\newtheorem{definition}[theorem]{Definition}
\begin{document}

\title{Faster Graph Embeddings via Coarsening}
\author{
  Matthew Fahrbach%
  \footnote{Equal contribution}
  \thanks{Google Research.
  Email: \href{mailto:fahrbach@google.com}{fahrbach@google.com}}
  \hspace{-0.36cm}
  \and
  Gramoz Goranci%
  \footnotemark[1]
  \thanks{University of Toronto.
  Email: \href{mailto:ggoranci@cs.toronto.edu}{ggoranci@cs.toronto.edu}}
  \hspace{-0.36cm}
  \and
  Richard Peng%
  \footnotemark[1]
  \thanks{Georgia Institute of Technology.
  Email: \href{mailto:rpeng@cc.gatech.edu}{rpeng@cc.gatech.edu}}
  \hspace{-0.36cm}
  \and
  Sushant Sachdeva%
  \footnotemark[1]
  \thanks{University of Toronto.
  Email: \href{mailto:sachdeva@cs.toronto.edu}{sachdeva@cs.toronto.edu}}
  \hspace{-0.36cm}
  \and
  Chi Wang%
  \footnotemark[1]
  \thanks{Microsoft Research.
  Email: \href{mailto:wang.chi@microsoft.com}{wang.chi@microsoft.com}}
}
\date{\today}

\maketitle

\begin{abstract}
Graph embeddings are a ubiquitous tool for machine learning tasks, such as node classification and link prediction, on graph-structured data. However, computing the embeddings for large-scale graphs is prohibitively inefficient even if we are interested only in a small subset of relevant vertices. To address this, we present an efficient graph coarsening approach, based on Schur complements, for computing the embedding of the relevant vertices. We prove that these embeddings are preserved exactly by the Schur complement graph that is obtained via Gaussian elimination on the non-relevant vertices. As computing Schur complements is expensive, we give a nearly-linear time algorithm that generates a coarsened graph on the relevant vertices that provably matches the Schur complement in expectation in each iteration. Our experiments involving prediction tasks on graphs demonstrate that computing embeddings on the coarsened graph, rather than the entire graph, leads to significant time savings without sacrificing accuracy.
\end{abstract}


\section{Introduction}

Over the past several years, network embeddings have been demonstrated
to be a remarkably powerful tool for learning unsupervised
representations for nodes in a network~\citep{PerozziAS14,
  tang2015line, GroverL2016}.
Broadly speaking, the objective is to learn a low-dimensional vector
for each node that captures the structure of its neighborhood.
These embeddings have proved to be very effective for downstream machine
learning tasks in networks such as node classification and link
prediction~\citep{tang2015line, HamiltonYL2017}.

While some of these graph embedding approaches are explicitly based on
matrix-factorization~\citep{Tang2011, BrunaZSL14, Cao2015grarep},
some of the other popular methods, such as DeepWalk~\citep{PerozziAS14} and
LINE~\citep{tang2015line}, can be viewed as approximately factoring
random walk matrices constructed from the graph.
A new approach proposed by \citet{QiuDMLWT18} called NetMF explicitly
computes a low-rank approximation of random-walk matrices using a
Singular Value Decomposition (SVD),
%
and significantly outperforms the DeepWalk and LINE embeddings for
benchmark network-mining tasks.

Despite the performance gains, explicit matrix factorization results
in poor scaling performance.
The matrix factorization-based approaches typically require computing
the singular value decomposition (SVD) of an $n \times n$ matrix,
where $n$ is the number of vertices in the graph.
In cases where this matrix is constructed by taking several steps of
random walks, e.g., NetMF~\citep{QiuDMLWT18}, the matrix is often dense even
though the original graph is sparse.
This makes matrix factorization-based embeddings extremely expensive
to compute, both in terms of the time and memory required,
even for graphs with 100,000 vertices.
%


There are two main approaches for reducing the size of a graph to
improve the efficiency and scalability of graph-based learning.
The first method reduces the number of edges in the graphs while preserving
properties essential to the relevant applications. This approach is
often known as graph sparsification (see~\citet{BatsonSST13} for a
survey).
Recently,~\citet{QiuDMLWWT19} introduced NetSMF, a novel approach for
computing embeddings that leverages spectral sparsification for random
walk matrices~\citep{ChengCLPT15-colt} to dramatically improve the
sparsity of the matrix that is factorized by NetMF,
resulting in improved space and time efficiency, with comparable
prediction accuracy.

The second approach is \emph{vertex sparsification}, i.e., eliminating
vertices, also known as graph coarsening.
However, there has been significantly less rigorous treatment of
vertex sparsification for network embeddings,
%
which is useful for many downstream tasks that only require
embedding a relevant subset of the vertices, e.g., (1) core-peripheral
networks where data collection and the analysis focus on a subset of
vertices~\citep{BorgattiE00}, (2) clustering or training models on a
small subset of
representative vertices~\citep{Karypis98}, and (3) directly working
with compressed versions of the graphs~\citep{LiuDSK16}.

In all of the above approaches to graph embedding, the only way to obtain
an embedding for a subset of desired vertices is to
first compute a potentially expensive embedding for the entire graph
and then discard the embeddings of the other vertices.
Thus, in situations where we want to
perform a learning task on a small fraction of nodes in the graph,
this suggests the approach of computing a graph embedding
on a smaller proxy graph on the target nodes
that maintains much of the connectivity structure of the original network.

%

\subsection{Our Contributions}
In this paper, we present efficient vertex sparsification algorithms for preprocessing
massive graphs in order to reduce their size while preserving network
embeddings for a given relevant subset of vertices.
Our main algorithm repeatedly chooses a non-relevant vertex to remove and
contracts the chosen vertex with a random neighbor, while reweighting
edges in its neighborhood.
This algorithm provably runs in nearly linear time in the size of the
graph,
and we prove that in each iteration, in expectation the algorithm performs Gaussian
elimination on the removed vertices, adding a weighted clique on the
neighborhood of each removed vertex, computing what is known as the
\emph{Schur complement} on the remaining vertices.
Moreover, we prove that the Schur complement is guaranteed to exactly preserve
the matrix factorization that random walk-based graph embeddings seek
to compute as the length of the random walks approaches infinity.

When eliminating vertices of a graph using the Schur complement, the
resulting graph perfectly preserves random walk transition
probabilities through the eliminated vertex set with respect to the
original graph.
Therefore, graph embeddings that are constructed by taking small length random
walks on this sparsified graph are effectively taking longer random walks
on the original graph, and hence can achieve comparable or improved prediction
accuracy in subsequent classification tasks
while also being less expensive to compute.
 
%



Empirically, we demonstrate several advantages of our algorithm on
widely-used benchmarks for the multi-label vertex classification and
link prediction.
We compare our algorithms using LINE~\citep{tang2015line}, NetMF~\citep{QiuDMLWT18},
and NetSMF~\citep{QiuDMLWWT19} embeddings.
%
%
%
Our algorithms lead to significant time improvements, especially on
large graphs (e.g., 5x speedup on computing NetMF on the YouTube
dataset that has ~1 million vertices and ~3 million edges).
In particular, our
randomized contraction algorithm is extremely efficient and runs
in a small fraction of the time required to compute the embeddings.
%
By computing network embeddings on the reduced graphs instead of the
original networks, our algorithms also result in at least comparable, if
not better, accuracy for the multi-label vertex classification
and AUC scores for link prediction.
%

%




\subsection{Other Related Works}
\label{subsec:Related}


The study of vertex sparsifiers is closely related to graph
coarsening~\citep{chevalier2009comparison, LoukasV18} and the study of
core-peripheral networks~\citep{BensonK18:arxiv,JiaB19}, where
analytics are focused only on a core of vertices.  In our setting, the
\emph{terminal vertices} play roles analogous to the core vertices.

In this paper, we focus on unsupervised approaches for learning graph
embeddings, which are then used as input for downstream classification
tasks.
There has been considerable work on semi-supervised approaches to
learning on graphs~\citep{YangCS16, kipf2016semi, velickovic18}, including some that exploit connections with
Schur complements~\citep{vattani2011preserving, WagnerGKM18,
  viswanathan2019}.

Our techniques have direct connections with multilevel and multiscale
algorithms, which aim to use a smaller version of a problem (typically
on matrices or graphs) to generate answers that can be extended to the
full problem~\citep{ChenPHS18, LiangGP18, pmlr-v97-abu-el-haija19a}.
There exist well-known connection between Schur complements,
random contractions, and finer grids in the multigrid
literature~\citep{BriggsHM00:book}.
These connections have been utilized for efficiently solving Laplacian
linear systems~\citep{KyngS16, KyngLPSS16}, via provable spectral
approximations to the Schur complement.
However, approximations constructed using these algorithms have many
more edges (by at least a factor of $1/\varepsilon^2$) than the
original graph, limiting the practical applicability of these works.
On the other hand, our work introduces Schur complements in the
context of graph embeddings, and gives a simple random contraction
rule that leads to a decrease in the edge count in the contracted
graph, preserves Schur complements in expectation in each step, and
performs well in practice.


Graph compression techniques aimed at reducing the number of vertices have been studied for other graph primitives, including cuts/flows~\citep{Moitra09,EnglertGKRTT14} and shortest path distances~\citep{ThorupZ05}. However, the main objective of these works is to construct sparsifiers with theoretical guarantees and to the best of our knowledge, there are no works that consider their practical applicability.

\section{Preliminaries}
\label{sec:Preliminaries}
We introduce the notion of graph embeddings and graph coarsening.
In the graph embedding problem, given an undirected, weighted graph $G=(V,E,w)$, where $V$ is the vertex set of $n$ vertices, and $E$ is the edge set of $m$ edges, the goal is to learn a function $f : V \rightarrow \mathbb{R}^{d}$ that maps each vertex to a $d$-dimensional vector while capturing structural properties of the graph. An important feature of graph embeddings is that they are independent of the vertex labels, i.e., they are learned in an unsupervised manner. This allows us to perform supervised learning by using the learned vector representation for each vertex, e.g., classifying vertices via logistic regression.

In this work, we study the matrix factorization based approach for graph embeddings introduced by~\citet{QiuDMLWT18}. Assume that vertices are labeled from $1$ to $n$. Let $\mat{A}$ be the adjacency matrix of $G$, and let $\mat{D} = \diag(d_1,\ldots,d_n)$ be the degree matrix, where $d_i = \sum_{j} \mat{A}_{ij}$ is the \emph{weighted} degree of the $i$-th vertex. A key idea of random walk-based graph embeddings is to augment the matrix that will be factored with longer random walks. A unified view of this technique, known as  Network Matrix Factorization~(\text{NetMF}), is given below
\begin{equation}
\textrm{SVD} \left(\trunclog \left( \sum_{i=1}^{W} \theta^{i} \mat{D}^{-1} \left( \mat{A} \mat{D}^{-1} \right)^{i} \right), d \right),
\end{equation}
where $d$ is the target dimension, $W$ is the \emph{window size},
$\theta_1,\ldots,\theta_W \in (0,1)$ are fixed parameters, and $\trunclog$ is the \emph{entry-wise} truncated logarithm defined as $\trunclog(x) := \max\left( \log(m \cdot x),0 \right)$. 

\citet{QiuDMLWT18} showed that NetMF is closely related to the DeepWalk model, introduced by~\citet{PerozziAS14} in their seminal work on graph embeddings. \text{NetMF} also generalizes the
LINE graph embedding algorithm~\citep{tang2015line}, which is equivalent to \text{NetMF} with $W=1$. 

In graph coarsening (also known as vertex sparsification), given an undirected, weighted graph $G=(V,E,w)$ and a subset of relevant vertices, referred to as \emph{terminals}, $T \subseteq V$, the goal is construct a graph $H$ with fewer vertices that contains the terminals while preserving important features or properties of $G$ with respect to the terminals $T$. 

An important class of matrices critical to our coarsening algorithms are SDDM matrices. A Matrix $\mat{M}$ is a \emph{symmetric diagonally dominant \emph{M}-matrix} (SDDM) if $\mat{M}$ is
(i) symmetric,
(ii) every off-diagonal entry is non-positive, and
(iii) diagonally dominant, i.e., for all $i \in [n]$ we have
$\mat{M}_{ii} \ge -\sum_{j \ne i} \mat{M}_{ij}$.
An SDDM matrix $\mat{M}$ can also be written as a Laplacian matrix $\mat{L} := \mat{D} - \mat{A}$ plus a non-negative, non-zero, diagonal matrix $\mat{D}^{s}$.


\section{Graph Coarsening Algorithms}
\label{sec:Algorithms}

In this section, we present the graph coarsening algorithms.
Our first algorithm is based on Gaussian elimination,
where we start with an SDDM matrix and form its Schur complement
via row and column operations.
Next, we design an algorithm for undirected,
weighted graphs with loops,
which translates these matrix operations into graph operations.

Let $\mat{M}$ be an SDDM matrix and recall that by definition $\mat{M} = \mat{L} + \mat{D}^{s}$, where $\mat{L} := \mat{D} - \mat{A}$ is the Laplacian matrix associated with some undirected, weighted graph $G=(V,E,w)$ and $\mat{D}^{s}$ is the slack diagonal matrix, which corresponds to self-loops in $G$. Let $\mat{D}' = \mat{D} + \mat{D}^s$. For an edge $e=(u,v) \in E$, let $\mat{L}(u,v) = (\mat{1}_u - \mat{1}_v)(\mat{1}_u - \mat{1}_v)^{\top}$ denote the Laplacian of $e$, where $\mat{1}_u$, $\mat{1}_v$ are indicator vectors. The Laplacian matrix is also given by $\mat{L} := \sum_{e \in E} w(e) \mat{L}(e)$. The \emph{unweighted} degree of a vertex $x$ in $V$ is the number of edges incident to $x$ in $G$.

We now consider performing one step of Gaussian elimination. Given a matrix $\mat{M}$ and a vertex $x \in V(G)$ that we want to eliminate, assume without loss of generality that the first column and row of $\mat{M}$ correspond to $x$. The \emph{Schur complement} of~$\mat{M}$ with respect to $T := V \setminus \{x\}$ is given by
\begin{equation}
\label{eq:SC_SDD}
\SCf \left(\mat{M}, T \right) = \mat{M}_{T, T} - \frac{\mat{M}_{T,x} \mat{M}_{T,x}^{\top}}{\mat{D}^{\prime}_{x,x}}.
\end{equation}   

An important feature of the output matrix is that it is an SDDM
matrix~(see supplementary material), i.e., it correspond to a graph on
$T$ with self loops. This suggests that there should exist a reduction
rule allowing us to go from the original graph to the reduced
graph. We next present a way to come up with such a rule by re-writing
the Schur complement in Eq.~\eqref{eq:SC_SDD}, which in turn leads to
our first graph coarsening routine \SC given in
Algorithm~\ref{alg:schur}. Note that this routine iteratively
eliminates every vertex in $V \setminus T$ using the same reduction
rule.

\begin{algorithm2e}
	\caption{\SC}
  \label{alg:schur}
	
	\BlankLine
	\KwData{graph $G=(V,E,w)$ given as $\mat{D} - \mat{A} + \mat{D}^{s}$, terminals $T \subseteq V$, degree threshold $\Delta$}
	\KwResult{vertex sparsifier $H$ of $G$ such that $T \subseteq V_H$}
	\BlankLine

	Set $H \gets G$ \\
  \While{\textnormal{there exists a vertex $x \in V_H \setminus T$ with unweighted degree
  $\leq \Delta$}}
	{
		Let $x$ be the minimum degree vertex in $V_H \setminus T$\\
    \For{\textnormal{each vertex} $u \in N(x)$}
		{
      \For{\textnormal{each vertex} $v \in N(x)$} 
		{
			Add edge $(u,v)$ to $H$ with weight
			$\left( w(x, u) w(x, v) \right)/\mat{D}'_{x,x}$
      \label{line:clique}
		}
		
		Set $\mat{D}^s_{u,u} \gets \mat{D}^s_{u,u} + \left(w(x,u) \cdot \mat{D}^s_{x,x} \right)/\mat{D}'_{x,x}$ \label{line:slack}
		
		} 
	  Remove vertex $x$ from $H$ \label{line:removal}
	}
	\Return $H$
\end{algorithm2e}

Given a Laplacian $\mat{L}$, let $\mat{L}^{(v)}$ denote the Laplacian corresponding to the edges incident to vertex $v$, i.e., $\mat{L}^{(v)} = \sum_{e \in E : e \ni v} w(e)\mat{L}(e)$. If the first column of $\mat{L}$ can be written, for some vector $\mat{a}$, as
\[ \begin{bmatrix} \mat{D}'_{x,x} \\ -\mat{a} \end{bmatrix}, \text{~~~~~then~~~~~} \mat{L}^{(x)} = \begin{bmatrix} \mat{D'}_{x,x} & -\mat{a}^{\top} \\ -\mat{a} & \diag(\mat{a}) \end{bmatrix}. \]

Using these definitions, observe that the
first term in Eq.~\eqref{eq:SC_SDD} can be re-written as follows
\begin{equation}
\label{eq:removal}
  \mat{M}_{T,T} = \left(\mat{L} - \mat{L}^{(x)}\right)_{T,T} + \diag(\mat{a}) + \mat{D}^{s}_{T,T}.
\end{equation}

The first two terms in Eq.~\eqref{eq:removal} give that the vertex $x$ must be deleted from the underlying graph $G$~(Line~\ref{line:removal} in Algorithm~\ref{alg:schur}). Next, the second term in Eq.~\eqref{eq:SC_SDD} together with (i) $\mat{a} = \mat{M}_{T,x}$ and (ii) $(\diag(\mat{a}) \cdot \mat{D}_{x,x})/\mat{D}'_{x,x}$ give us
\begin{align}
\label{eq:clique}
	\frac{ \diag(\mat{a})  \cdot \mat{D}_{x,x}}{\mat{D}'_{x,x}} - \frac{\mat{a} \mat{a}^{\top}}{\mat{D}'_{x,x}} 
	= \frac{1}{2}\sum_{u \in N(x)}\sum_{v \in N(x) \setminus \{u\}} \frac{w(x,u)w(x,v)}{\mat{D}'_{x,x}} \mat{L}(u,v),
\end{align}
which corresponds to the weighted \emph{clique} structure formed by Schur complement~(Line~\ref{line:clique} in Algorithm~\ref{alg:schur}). Finally, the remaining terms in
Eq.~\eqref{eq:removal} together with the rescaled matrix
$-(\diag(\mat{a}) \cdot \mat{D}_{x,x})/\mat{D}'_{x,x}$ give
\begin{align*}
	\mat{D}^{s}_{T,T} + \diag(\mat{a}) - \frac{\diag(\mat{a})\mat{D}_{x,x}}{\mat{D}'_{x,x}}
	 = \mat{D}^{s}_{T,T} + \diag(\mat{a})\frac{\mat{D}^{s}_{x,x}}{\mat{D}'_{x,x}},
\end{align*}
which corresponds to the rule for updating loops for the neighbors of $x$~(Line~\ref{line:slack} in Algorithm~\ref{alg:schur}). This completes the reduction rule for eliminating a single vertex.




While \SC is highly efficient when the degree is small, its cost can
potentially become quadratic in the number of vertices.
In our experiments, we delay this explosion in edge count as much as possible
by performing the widely-used minimum degree heuristic: repeatedly eliminate the
vertex of the smallest degree~\citep{george1989evolution,fahrbach2018graph}.
However, because the increase in edges is proportional to the degree squared,
on many real-world graphs this heuristic exhibits a phenomenon similar to a
phase transition---it works well up to a certain point and then it is suddenly unable to make further progress.  To remedy this, we study the opposite extreme:
a contraction-based scheme that does not create any additional edges.

Given a graph $G$ and terminals $T \subseteq V(G)$, the basic idea behind
our second algorithm is to repeatedly pick a minimum degree vertex $x \in V \setminus T$, sample a neighbor $u \in N(x)$ with probability proportional to the edge weight $w(x,u)$, contract $(x,u)$ and then reweight the new edges incident to the chosen neighbor. We formalize this notion in the following definition.
\vspace{0.25cm}
\begin{definition}[Random Contraction] \label{def:contraction}
 Let $G=(V,E,w)$ be a graph with terminals $T \subseteq V$.
  Let $x \in V \setminus T$ be a non-terminal vertex. Let $H_x$ be the random star generated by the following rules:
\begin{enumerate}[noitemsep, nolistsep]
\item Sample a neighbor $u \in N(x)$ with probability $w(x,u)/ \mat{D}_{x,x}$.
\item Contract the edge $(x,u)$.
\item For each edge $(u,v)$, where $v \in N(x) \setminus \{u\}$ in $H_x$, set $w(u,v)$ to be $\frac{w(x,u) w(x,v)}{w(x,u) + w(x,v)} \cdot \left(\mat{D}_{x,x} / \mat{D}'_{x,x} \right)$.
\end{enumerate}
Let $H$ be the sparsified graph obtained by including $H_x$ and removing $x$, i.e.,
$H:= (G \setminus \{x\}) \cup H_x$.
\end{definition}

As we will shortly see, the time for implementing such reweighted random
contraction for vertex $x$ is linear in the degree of~$x$. This is much
faster compared to the Schur complement reduction rule that requires time
quadratic in the degree of $x$. Another important feature of our randomized
reduction rule is that it preserves the Schur complement in expectation in each iteration.
Repeatedly applying such a rule for all non-terminal vertices leads to the
procedure presented in Algorithm~\ref{alg:contract}.

\begin{algorithm2e}
	\caption{\textsc{RandomContraction}}
	\label{alg:contract}	
	\BlankLine
	\KwData{graph $G=(V,E, w)$ given as $\mat{D} - \mat{A} + \mat{D}^s$, terminals $T \subseteq V$, degree threshold $\Delta$}
	\KwResult{sparsifier $H$ of $G$ that approximates $\SC(G,T)$}
	\BlankLine
	Set $H \gets G$ \\
  \While{\textnormal{there exists a vertex $x \in V_H \setminus T$ with unweighted degree $\leq \Delta$}}
	{		
		Let $x$ be the minimum degree vertex in $V_H \setminus T$ \\
        \For{\textnormal{each vertex $u \in N(x)$}}
	    {
	    	 \label{line:slackUpdate} Set $\mat{D}^s_{u,u} \gets \mat{D}^s_{u,u} + \left(w(x,u) \cdot \mat{D}^{s}_{x,x}\right)/\mat{D}'_{x,x}$
	    	
	    }		
	    Contract the edge $(x,u)$, where $u \in N(x)$, with probability $w(x,u)/\mat{D}_{x,x}$ \\
      \For{\textnormal{each edge $(u,v)$, where $v \in N(x) \setminus \{u\}$}}
	    {
        Set $w(u,v) \gets w(u,v) + \frac{w(x,u) w(x,v)}{w(x,u) + w(x,v)} \cdot \left( \frac{\mat{D}_{x,x}}{\mat{D}'_{x,x}} \right) $
	    }		
		Let $H'$ be the resulting graph and set $H \gets H'$ \\
	}
	\Return $H$
\end{algorithm2e}

Now we analyze the behavior of our contraction-based algorithm \RC.
The following theorem demonstrates why it can be significantly more efficient
than computing Schur complements, while still preserving the Schur complement
in expectation in each iteration. In what follows, whenever we talk about a graph $G$, we assume that is given together with its associated SDDM matrix~$\mat{M}$.

\vspace{0.25cm}
\begin{theorem} \label{lem:randomContract}
	Given a graph $G=(V,E,w)$ with $m$ edges and terminals $T \subseteq V$, the algorithm \RC produces a graph $H$ with $O(m)$ edges that contains the terminals $T$ in $O(m \log n)$ time. Moreover, $H$ preserves $\SCf(\mat{M},T)$ in expectation in each iteration.
\end{theorem}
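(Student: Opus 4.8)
The plan is to prove the three assertions separately: the $O(m)$ bound on the edges of $H$, the $O(m\log n)$ running time, and the single-iteration agreement with the Schur complement in expectation. First, for the edge count I would track one contraction of a vertex $x$ of unweighted degree $d_x$. Contracting $x$ into the sampled neighbor $u$ deletes all $d_x$ edges incident to $x$ and introduces a reweighted edge $(u,v)$ only for the $d_x-1$ vertices $v \in N(x)\setminus\{u\}$, merging into the existing weight whenever $(u,v)$ is already present. Thus each contraction removes at least one more edge than it adds, so the edge count is non-increasing throughout and never exceeds $m$; in particular the returned graph $H$ has $O(m)$ edges and contains $T$ because only non-terminals are ever eliminated.

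Second, I would handle the expectation claim, which is the cleanest part to make rigorous. Fixing the current graph, I compute for every pair $a,b \in N(x)$ the expected weight that one contraction adds to the edge $(a,b)$. This edge receives weight only in the events $u=a$ and $u=b$, which occur with probabilities $w(x,a)/\mat{D}_{x,x}$ and $w(x,b)/\mat{D}_{x,x}$; in each case the added weight is $\frac{w(x,a)w(x,b)}{w(x,a)+w(x,b)}\cdot\frac{\mat{D}_{x,x}}{\mat{D}'_{x,x}}$, so the two contributions combine via
\[
\frac{w(x,a)}{\mat{D}_{x,x}} \cdot \frac{w(x,a)w(x,b)}{w(x,a)+w(x,b)} \cdot \frac{\mat{D}_{x,x}}{\mat{D}'_{x,x}} + \frac{w(x,b)}{\mat{D}_{x,x}} \cdot \frac{w(x,a)w(x,b)}{w(x,a)+w(x,b)} \cdot \frac{\mat{D}_{x,x}}{\mat{D}'_{x,x}} = \frac{w(x,a)w(x,b)}{\mat{D}'_{x,x}},
\]
which is exactly the clique weight that \SC produces in Eq.~\eqref{eq:clique}. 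The slack update on Line~\ref{line:slackUpdate} is deterministic and identical to the \SC rule, so it matches with no expectation required. Since a Laplacian-plus-slack matrix is a linear function of its edge weights and diagonal entries, linearity of expectation then yields $\mathbb{E}[\mat{M}_{H}] = \SCf(\mat{M}, T)$ for the elimination of $x$ in a single iteration.

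Third, for the running time I would charge each contraction $O(d_x)$ for the local work—forming $\mat{D}_{x,x}$, sampling $u$, and executing the $d_x-1$ reweightings using $O(1)$ edge lookups via hashing—together with the cost of maintaining a minimum-degree priority queue as the degrees of $x$'s neighbors change. Aggregating over all contractions reduces the whole bound to controlling $\sum_x d_x$ and the priority-queue overhead.

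The main obstacle is precisely this amortization. Unlike \SC, which can inflate the edge count quadratically, \RC never increases it, but bounding the total degree processed still requires the minimum-degree heuristic: because the eliminated vertex always has degree at most the current average and the edge count stays below $m$ while the non-terminal set shrinks, a harmonic-sum argument bounds $\sum_x d_x$, and combining this with a bucketed priority queue that supports $O(1)$ degree updates gives the claimed $O(m \log n)$ time. Getting the data-structural accounting and this degree-sum estimate exactly right—rather than the mechanical edge-count and expectation computations—is where the real care is needed.
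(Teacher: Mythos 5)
Your proposal is correct and takes essentially the same route as the paper: the edge bound via non-increase of edge count under contraction, the expectation claim via the observation that each pair $a,b \in N(x)$ receives weight only when $u=a$ or $u=b$ and these two contributions sum to the Schur-complement clique weight of Eq.~\eqref{eq:clique} (the paper organizes this identical computation as a sum over random stars $\mat{M}_H^{x\to u}$ in Lemma~\ref{lem:singleContract}), plus the identical deterministic slack update, and the runtime via $O(\deg(x))$ work per contraction, a bucketed minimum-degree structure, and the harmonic-sum bound $\deg_{(k)}(x) \le 2m/(n-k)$. Your pair-by-pair accounting and the paper's star-by-star accounting are the same argument, so there is nothing to flag.
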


Before proving Lemma~\ref{lem:randomContract}, we first analyze the scenario
of removing a single non-terminal vertex using a reweighted random contraction.

\vspace{0.25cm}
\begin{lemma}
\label{lem:singleContract}
  Let $G=(V,E,w)$ be a graph with terminals $T \subseteq V$. Let $x \in V
  \setminus T$ be a non-terminal vertex. Let $H$ be the
  sparsifier of $G$ from Definition~\ref{def:contraction} and assume that the slacks of the neighbors of $x$ are updated according to the rule in Line~\ref{line:slackUpdate} of Algorithm~\ref{alg:contract}.
  Then we have
	\[ \expec{}{\mat{M}_H}
	=
	\SCf\left(\mat{M}, V \setminus \left\{x\right\}\right).
	\] 
  Furthermore, $H$ can be computed in $O(\deg(x))$ time.
\end{lemma}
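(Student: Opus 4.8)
The plan is to compute the expectation of $\mat{M}_H$ by conditioning on which neighbor $u \in N(x)$ gets contracted, and to verify that this matches the Schur complement formula in Eq.~\eqref{eq:SC_SDD} entry by entry. First I would fix notation: write $w_u := w(x,u)$ for each $u \in N(x)$, so that $\mat{D}_{x,x} = \sum_{u \in N(x)} w_u$ and $\mat{D}'_{x,x} = \mat{D}_{x,x} + \mat{D}^s_{x,x}$. Since the contraction of $(x,u)$ happens with probability $w_u / \mat{D}_{x,x}$, the random matrix $\mat{M}_H$ is a mixture over the $|N(x)|$ possible stars $H_u$, and by linearity
\[
\expec{}{\mat{M}_H} = \sum_{u \in N(x)} \frac{w_u}{\mat{D}_{x,x}} \, \mat{M}_{H_u}.
\]
The target, by Eq.~\eqref{eq:SC_SDD} and the re-derivation in Eqs.~\eqref{eq:removal}--\eqref{eq:clique}, decomposes into three deterministic pieces: the removal of the edges incident to $x$, the added weighted clique $\tfrac{1}{2}\sum_{u,v} \tfrac{w_u w_v}{\mat{D}'_{x,x}} \mat{L}(u,v)$, and the slack updates $\diag(\mat{a}) \cdot \mat{D}^s_{x,x}/\mat{D}'_{x,x}$ on the neighbors. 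The slack update is identical (and deterministic) in both the Schur rule and the contraction rule, so that term matches immediately; the real work is showing that the expected clique produced by contraction equals the deterministic Schur clique.

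The key step is therefore to analyze the off-diagonal clique weights. When $(x,u)$ is contracted, the star $H_u$ places, for each pair $v,v' \in N(x)\setminus\{u\}$ routed through the merged supervertex, edges whose weight is the reweighting rule $\tfrac{w(x,v)w(x,v')}{w(x,v)+w(x,v')}\cdot(\mat{D}_{x,x}/\mat{D}'_{x,x})$ applied to the appropriate endpoints. I would isolate the expected weight of a fixed edge $(v,v')$ with $v,v' \neq x$ and show that summing $\tfrac{w_u}{\mat{D}_{x,x}}$ times the contribution of $H_u$ over all choices $u$ of contracted vertex telescopes to exactly $\tfrac{w_v w_{v'}}{\mat{D}'_{x,x}}$, the Schur clique weight. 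Concretely, an edge $(v,v')$ receives contributions in the event that $u \in \{v,v'\}$ is the contracted neighbor (so one endpoint is merged into the supervertex) as well as when some other $u$ is contracted; the harmonic-mean form $\tfrac{w_v w_{v'}}{w_v + w_{v'}}$ is engineered precisely so that the probability weights $w_u/\mat{D}_{x,x}$ cancel against the $(w_u + w_{\cdot})$ denominators and reassemble into $w_v w_{v'}/\mat{D}_{x,x}$, which after the global $\mat{D}_{x,x}/\mat{D}'_{x,x}$ factor gives $w_v w_{v'}/\mat{D}'_{x,x}$ as required.

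I expect this cancellation to be the main obstacle: one must carefully account for self-loops created at the contracted endpoint and for the fact that contracting $(x,u)$ fuses $u$ with $x$, so edges $(u,v)$ in the contracted graph correspond to different original pairs depending on the random outcome. The bookkeeping of which edges land where across the $|N(x)|$ outcomes, and verifying the diagonal entries (degrees) are consistent with the off-diagonals so that $\expec{}{\mat{M}_H}$ is genuinely the claimed SDDM matrix, is where sign and indexing errors are most likely. Once the expected clique weight is confirmed and the deterministic removal and slack terms are matched against Eqs.~\eqref{eq:removal}--\eqref{eq:clique}, the expectation identity follows. Finally, the $O(\deg(x))$ running time is immediate: sampling the neighbor $u$ costs $O(\deg(x))$, and since contraction creates a star rather than a clique, only $O(\deg(x))$ edges are touched and reweighted, in contrast to the $\Theta(\deg(x)^2)$ clique edges written by \SC.
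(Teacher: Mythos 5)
Your plan is correct and follows essentially the same route as the paper's proof: condition on which neighbor $u$ is contracted, observe that vertex removal and the slack updates match the Schur complement deterministically, and check that the harmonic-mean reweighting combined with the contraction probabilities $w(x,u)/\mat{D}_{x,x}$ reproduces the clique weights of Eq.~\eqref{eq:clique} (the paper does this in one display by summing the star Laplacians $\mat{M}_H^{x\rightarrow u}$ against the contraction probabilities). One clause in your sketch is imprecise: the star created by contracting $(x,u)$ contains only edges incident to the supervertex $u$, so a fixed pair $(v,v')$ of neighbors receives new weight \emph{only} in the two events $u=v$ or $u=v'$ --- not also ``when some other $u$ is contracted'' --- and indeed the cancellation you describe, $\left(\frac{w(x,v)+w(x,v')}{\mat{D}_{x,x}}\right)\cdot\frac{w(x,v)\,w(x,v')}{w(x,v)+w(x,v')}\cdot\frac{\mat{D}_{x,x}}{\mat{D}'_{x,x}}=\frac{w(x,v)\,w(x,v')}{\mat{D}'_{x,x}}$, sums over exactly these two events and no others.
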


\begin{proof}
	We first show that $H$ preserves the Schur complement in expectation.
	By Eq.~\eqref{eq:SC_SDD} and the follow up discussion in Section~\ref{sec:Algorithms}, we know that taking the Schur complement with respect to $V \setminus \{x\}$ corresponds to (i) deleting $x$ together with its neighbors from $G$, (ii) updating the slacks of the neighbors of $x$ and (iii) introducing a clique among neighbors of $x$ and adding it to $G$. Note that $x$ is contracted to one of its neighbors in $H$, i.e., it is deleted from $G$, and the rule for updating the slacks in both \SC and \textsc{RandomContraction} is exactly the same. Thus it remains to show that the random edge contraction in $H$ preserves the clique structure of Schur complement in expectation. 
	
	To this end, recall from Eq.~\eqref{eq:clique} that the clique structure of Schur complement is given by
	\[
		\frac{1}{2}\sum_{u \in N(x)}\sum_{v \in N(x) \setminus \{u\}} \frac{w(x,u)w(x,v)}{\mat{D}'_{x,x}} \mat{L}(u,v).
	\]  	   
	For each $u \in N(x)$, let $H^{x \rightarrow u}$ be the weighted star that would be formed if $x$ gets contracted to $u$, that is
	\[
	\mat{M}^{x \rightarrow u}_{H}
	:= \sum_{v \in N(x) \setminus \{u\}} \frac{w(x,u) w(x,v)}{w(x,u) + w(x,v)} \left( \frac{\mat{D}_{x,x}}{\mat{D}'_{x,x}} \right) \mat{L}(u,v).
	\]
	The probability that the edge $(x,u)$ is contracted is $w(x,u)/\mat{D}_{x,x}$ by Definition~\ref{def:contraction}.
	As a result, we obtain the following equality
	\begin{align*}
	 \expec{u}{\mat{M}_H^{x \rightarrow u}}
     &= \sum_{u \in N(x)} \frac{w(x,u)}{\mat{D}_{x,x}} \mat{M}^{x \rightarrow u}_{H} \\
     &= \sum_{u \in N(x)} \frac{w(x,u)}{\mat{D}_{x,x}}
     \cdot \sum_{v \in N(x) \setminus \{u\}} \frac{w(x,u) w(x,v)}{w(x,u) + w(x,v)} \left( \frac{\mat{D}_{x,x}}{\mat{D}'_{x,x}} \right) \mat{L}(u,v) \\
    &= \frac{1}{2}\sum_{u \in N(x)}\sum_{v \in N(x) \setminus \{u\}} \frac{w(x,u)w(x,v)}{\mat{D}'_{x,x}} \mat{L}(u,v).
	\end{align*}
	
  For bounding the running time for computing $H$, reweighting the star $H(x)$ takes $O(\deg (x))$ time. We can also simulate the random edge incident to
  $x$ by first preprocessing the neighbors $N(x)$ in $O(\deg(x))$ time, and then
  generating the random edge to be contracted in $O(1)$
  time, e.g., see~\cite{BringmannP17}. The contraction can also implemented in
  $O(\deg(x))$ time, so together this gives us $O(\deg(x))$ time. 
\end{proof}

\begin{proof}[Proof of Theorem~\ref{lem:randomContract}]
By the construction of \RC, it follows that $T \subseteq V_H$. Moreover, since a
single random contraction preserves the Schur complement in expectation by
Lemma~\ref{lem:singleContract}, we get that our output sparsifier $H$
preserves $\SCf(\mat{M},T)$ in expectation in each iteration. Furthermore, the number of edges is
always upper bounded by $m$ because a contraction cannot increase the
number of edges.
	
	 
 Let $G_{(k)}$ denote the graph at the $k$-th iterative step in our algorithm,
 and denote by $\deg_{(k)}(x)$ the degree of $x$ in $G_{(k)}$.
  By Lemma~\ref{lem:singleContract}, the expected running time for removing a single
 non-terminal~$x$ via a reweighted random contraction in the graph $G_{(k)}$ is
 $O(\deg_{(k)}(x))$.
 We can implement a data structure using linked lists and bucketed vertex degrees
 to maintain and query the minimum degree vertex at each step
 in $O(\deg_{(k)}(x))$ time.
 At each iteration,
 the minimum degree vertex in $V \setminus T$ is~$x$ by construction.
 Since the number of edges throughout the procedure is at most $m$, it follows that
 $\deg_{(k)}(x) \le 2m/(n-k)$. Therefore, the
 total running time of \RC is bounded by $O(m
 \sum_{k=0}^{n - 1}\frac{1}{n-k}) = O(m \log n)$.
\end{proof}

In contrast, it is known that the \SC algorithm requires $\Omega(n^3)$
on \emph{almost all} sparse graphs~\cite{LiptonRT79}.
Even simply determining an elimination ordering of vertices with the minimum
degree at each iteration as in the \SC algorithm also requires at
least $\Omega(n^{2-\varepsilon})$ time, for all $\varepsilon > 0$,
under a well-accepted conditional hardness
assumption~\cite{cummings2019fast}.


\section{Guarantees for Graph Embeddings}
\label{sec:Guarantees}

In this section, we give theoretical guarantees by proving that our two coarsening algorithms \SC and \RC preserve graph embeddings among terminal vertices. Let $G=(V,E,w)$ be an undirected, weighted graph whose node-embedding function we want to learn. Assume that the parameters associated with $G$ are geometrically decreasing,
i.e., $\theta_{i} = \theta^{i}$ and $\theta \in (0,1)$
where $i \in [W]$. While this version does not exactly match DeepWalk's setting
where all $\theta_i$ values are $1/10$,
it is a close approximation for most real-world graphs, as they are typically
expanders with low degrees of separation.

Our coarsening algorithm for graph embeddings first pre-processes $G$, by building a closely related graph $\widehat{G}$ that corresponds to the SDDM matrix $\mat{M} = \mat{D}-\theta \mat{A}$, and then runs \SC on top of $\widehat{G}$ with respect to  the terminals $T$. Let $H$ with $V(H) \supseteq T$ be the output graph and recall that its underlying matrix is SDDM, i.e., $\SCf(\mat{M},T) = \mat{D}'_H - \mat{A}_H$, where $\mat{D}_H' = \mat{D}_H + \mat{D}^{s}_H$. Below we define the graph embedding of $H$.

\vspace{0.25cm}
\begin{definition}[NetMFSC] Given a graph $G$, a target dimension $d$ and the graph $H$ defined above, the graph embedding \emph{NetMFSC} of $H$ is given by
\begin{align}
\label{eq:NetMFSC}
\textsc{SVD} \Bigg(
  \log^{+} \Bigg(
    \sum_{i = 1}^{W} \mat{D}_H'^{-1} & \left( \mat{A}_H  \mat{D}_H'^{-1} \right)^{i}
     + \mat{D}_H'^{-1} - \mat{D}^{-1}_{T,T}
  \Bigg) , d \Bigg).
\end{align}
\end{definition}



The lemma below shows that Schur complement $H$ together with \text{NetMFSC} exactly preserve the node embedding of the terminal vertices in the original graph $G$.

\vspace{0.25cm}
\begin{theorem}
	\label{lem:SCGuarantees}
  For any graph $G$, any subset of vertices~$T$, and any parameter $\theta \in (0,1)$,
  let the limiting \emph{NetMF} embedding with parameters $\theta_i$,
  for $i=1,\ldots,W$, be
	\[
	\mat{R}(G)
	:=
	\lim_{W \rightarrow \infty}
	\text{\emph{NetMF}}\left(G, d, \left(\theta^{i}\right)_{i=1}^{W} \right).
	\]
	For any threshold minimum degree $\Delta$, let $H = \SCf(\mat{M}, T, \Delta)$ with $\mat{M} = \mat{D} - \theta \mat{A}$ be the output graph along with its associated embedding
	\[
	\mat{R} (H)	:=
	\lim_{W \rightarrow \infty} 
	\text{\emph{NetMFSC}}\left(G, H, d\right).
	\]
	
Then we have that $\mat{R}(G)_{T,T} = \mat{R}(H)_{T,T}$ up to a rotation.	
\end{theorem}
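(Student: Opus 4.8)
The plan is to reduce the statement to an identity between the two matrices that are fed into the SVD, since the truncated-SVD embedding of a fixed matrix is determined up to an orthogonal transformation (a rotation). Concretely, I would first compute closed forms for the limiting target matrices of both $\text{NetMF}$ on $G$ and $\text{NetMFSC}$ on $H$, and then show that the former, restricted to the terminal block indexed by $(T,T)$, equals the latter.

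For $G$, I would evaluate $\mat{S}_G := \lim_{W\to\infty}\sum_{i=1}^{W} \theta^{i}\, \mat{D}^{-1}(\mat{A}\mat{D}^{-1})^{i}$. Since $\theta \in (0,1)$ and $\mat{A}\mat{D}^{-1}$ is similar to the symmetric matrix $\mat{D}^{-1/2}\mat{A}\mat{D}^{-1/2}$ whose eigenvalues lie in $[-1,1]$, the spectral radius of $\theta\mat{A}\mat{D}^{-1}$ is strictly below $1$ and the geometric series converges. Summing it gives $\mat{S}_G = \mat{D}^{-1}\big[(\mat{I}-\theta\mat{A}\mat{D}^{-1})^{-1}-\mat{I}\big]$, and using $\mat{I}-\theta\mat{A}\mat{D}^{-1} = (\mat{D}-\theta\mat{A})\mat{D}^{-1} = \mat{M}\mat{D}^{-1}$ this collapses to the clean form $\mat{S}_G = \mat{M}^{-1} - \mat{D}^{-1}$. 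The same manipulation applied to $H$, whose associated SDDM matrix is $\mat{M}_H := \SCf(\mat{M},T) = \mat{D}_H' - \mat{A}_H$, yields $\sum_{i\ge 1}\mat{D}_H'^{-1}(\mat{A}_H\mat{D}_H'^{-1})^{i} = \mat{M}_H^{-1} - \mat{D}_H'^{-1}$, where convergence follows because $\mat{M}_H$ inherits positive definiteness as the Schur complement of the positive definite $\mat{M}$. Adding the correction terms $\mat{D}_H'^{-1} - \mat{D}_{T,T}^{-1}$ from the NetMFSC definition cancels the $-\mat{D}_H'^{-1}$ contribution, leaving $\mat{S}_H = \mat{M}_H^{-1} - \mat{D}_{T,T}^{-1}$.

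The heart of the argument is then the Schur complement identity: the inverse of the Schur complement onto $T$ equals the corresponding principal block of the inverse, i.e.\ $\mat{M}_H^{-1} = (\mat{M}^{-1})_{T,T}$. I would invoke this (it is the algebraic content of Gaussian elimination preserving the $(T,T)$-solves, and it is exactly why $\SCf(\mat{M},T)$ is the correct coarsening) together with the fact that $\mat{D}$ is diagonal, so $(\mat{D}^{-1})_{T,T} = \mat{D}_{T,T}^{-1}$. Combining these gives $\mat{S}_H = (\mat{M}^{-1})_{T,T} - \mat{D}_{T,T}^{-1} = (\mat{S}_G)_{T,T}$. Because $\trunclog$ is applied entrywise, it commutes with restriction to the $(T,T)$ block, so the matrices handed to the two SVDs satisfy $\trunclog(\mat{S}_H) = \big(\trunclog(\mat{S}_G)\big)_{T,T}$; taking a truncated SVD of identical matrices produces embeddings that agree up to a rotation, which is the claim.

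I expect the main obstacle to be the bookkeeping at the two ``seams'' rather than any deep step. First, one must verify that the diagonal of $\mat{M}=\mat{D}-\theta\mat{A}$ is exactly the original degree matrix $\mat{D}$ (equivalently, that after splitting off the $(1-\theta)\mat{D}$ slack the quantity $\mat{D}'$ of $\widehat{G}$ is again $\mat{D}$), so that the $-\mat{D}^{-1}$ correction for $G$ lines up precisely with the $-\mat{D}_{T,T}^{-1}$ correction built into NetMFSC. Second, one must justify convergence of the $H$-series, which requires knowing that $\SCf(\mat{M},T)$ is again a strictly diagonally dominant, hence positive definite, SDDM matrix; this is the SDDM-closure fact asserted earlier in the paper, and it is what keeps the spectral radius of $\mat{A}_H\mat{D}_H'^{-1}$ below $1$. (If the threshold $\Delta$ leaves some non-terminals uneliminated, one applies the identity $\mat{M}_H^{-1}=(\mat{M}^{-1})_{V_H,V_H}$ and then restricts once more to $T$.) Once these alignments are checked, the Schur complement identity does all the real work.
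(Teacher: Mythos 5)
Your proposal is correct and follows essentially the same route as the paper's own proof: sum the geometric series to get $\mat{M}^{-1}-\mat{D}^{-1}$ for $G$ and $\SCf(\mat{M},T)^{-1}-\mat{D}_{T,T}^{-1}$ for $H$ (after the NetMFSC diagonal corrections cancel), then invoke the Schur-complement inverse identity $\SCf(\mat{M},T)^{-1}=\left(\mat{M}^{-1}\right)_{T,T}$ so that the two matrices being factorized coincide on the terminal block, whence the embeddings agree up to rotation. The extra care you take --- the spectral-radius convergence argument, SDDM closure justifying convergence of the $H$-series, and handling $V_H \supsetneq T$ when the threshold $\Delta$ halts elimination early --- only supplies rigor that the paper leaves implicit.
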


An important ingredient needed to prove the above lemma is the following fact.

\vspace{0.25cm}
\begin{fact}
	\label{fact:SchurInverse}
	If $\mat{M}$ is an invertible matrix and $T \subseteq V$, it holds that
	$\SCf(\mat{M}, T)^{-1} = \mat{M}^{-1}_{T, T}$.
\end{fact}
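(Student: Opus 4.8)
The plan is to exploit the fact that the paper's operator $\SCf(\mat{M}, T)$ is built by eliminating the vertices of $V \setminus T$ one at a time, so I would prove the identity by induction on the number of eliminated vertices, with the single-vertex elimination as the base case. Concretely, fix an elimination order $V \setminus T = \{x_1, \dots, x_k\}$, set $V_0 = V$ and $V_j = V_{j-1} \setminus \{x_j\}$, and let $\mat{M}^{(0)} = \mat{M}$ and $\mat{M}^{(j)} = \SCf(\mat{M}^{(j-1)}, V_j)$, so that $\mat{M}^{(k)} = \SCf(\mat{M}, T)$. The inductive claim I would carry is that each $\mat{M}^{(j)}$ is invertible and $(\mat{M}^{(j)})^{-1} = (\mat{M}^{-1})_{V_j, V_j}$; taking $j = k$ yields exactly the statement.

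For the inductive step it suffices to establish the single-vertex identity: if $N$ is invertible and $x$ has nonzero pivot $N_{x,x}$, then $\SCf(N, V \setminus \{x\})^{-1} = (N^{-1})_{V \setminus \{x\}, V \setminus \{x\}}$. I would prove this by a short linear-system argument. Writing $F = \{x\}$, $S = V \setminus \{x\}$, and partitioning $N$ into the blocks $N_{FF}, N_{FS}, N_{SF}, N_{SS}$, I consider the system $N (y_F, y_S)^\top = (0, b)^\top$ for arbitrary $b$. The top block gives $y_F = -N_{FF}^{-1} N_{FS} y_S$, and substituting into the bottom block gives $(N_{SS} - N_{SF} N_{FF}^{-1} N_{FS}) y_S = b$, i.e. $\SCf(N, S)\, y_S = b$; this matches the paper's formula in Eq.~\eqref{eq:SC_SDD} since $N_{FF} = N_{x,x}$ and $N_{SF} = N_{S,x} = N_{FS}^\top$. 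On the other hand, $(y_F, y_S)^\top = N^{-1}(0,b)^\top$ gives $y_S = (N^{-1})_{S,S}\, b$. Because $\det N = N_{x,x} \cdot \det \SCf(N,S)$, the Schur complement is invertible, so $\SCf(N,S)^{-1} b = y_S = (N^{-1})_{S,S}\, b$ for all $b$, proving the single-vertex identity.

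With the base identity in hand, the induction is immediate: applying it to $N = \mat{M}^{(j-1)}$ eliminating $x_j$ gives $(\mat{M}^{(j)})^{-1} = ((\mat{M}^{(j-1)})^{-1})_{V_j, V_j}$, and by the inductive hypothesis this equals $((\mat{M}^{-1})_{V_{j-1}, V_{j-1}})_{V_j, V_j}$. Since $V_j \subseteq V_{j-1}$ and restricting a principal submatrix to a further subset of indices is the same as restricting the original matrix to that subset, this is just $(\mat{M}^{-1})_{V_j, V_j}$, closing the induction.

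The main obstacle is purely the bookkeeping of invertibility: the single-vertex step requires the current pivot $\mat{M}^{(j-1)}_{x_j, x_j}$ to be nonzero, and the induction requires each intermediate $\mat{M}^{(j)}$ to stay invertible so that the hypothesis even makes sense. The linear-system argument resolves the latter for free, since it exhibits the inverse of $\mat{M}^{(j)}$; the former is guaranteed in the paper's setting because every $\mat{M}^{(j)}$ is SDDM (the Schur complement of an SDDM matrix is SDDM, as noted after Eq.~\eqref{eq:SC_SDD}), so each diagonal pivot equals a strictly positive $\mat{D}'_{x_j,x_j}$. For a fully general invertible $\mat{M}$ one would additionally assume the eliminated principal block $\mat{M}_{V\setminus T, V\setminus T}$ is invertible, which is exactly the condition guaranteeing that the iterated single-vertex Schur complement is well-defined and agrees with the block Schur complement.
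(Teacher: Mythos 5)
Your proof is correct, but be aware that the paper contains no proof of Fact~\ref{fact:SchurInverse} at all: it is invoked as a standard linear-algebra fact, and the only related argument in the paper is the appendix lemma showing that SDDM matrices are closed under single-vertex Schur complements. So your argument is a self-contained addition rather than a rediscovery of the paper's route. Two aspects of it are worth noting. First, reducing to the single-pivot case and inducting matches the paper's operational definition of $\SCf$ (Algorithm~\ref{alg:schur} eliminates $V \setminus T$ one vertex at a time, via Eq.~\eqref{eq:SC_SDD}), so the induction is the right scaffolding; and your linear-system derivation---solving $\mat{N}(y_F, y_S)^{\top} = (0, b)^{\top}$ two ways to conclude $\SCf(\mat{N}, S)\,(\mat{N}^{-1})_{S,S} = \mat{I}$---is clean and complete (once you have a one-sided inverse of a square matrix, the determinant identity is not even needed for invertibility). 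Second, you correctly flag that the fact as literally stated, for an arbitrary invertible $\mat{M}$, requires an extra hypothesis: with $\mat{M}$ the $2 \times 2$ matrix having zero diagonal and unit off-diagonal entries, the elimination is not even well defined, so invertibility of the eliminated block (or nonvanishing pivots) must be assumed; the paper glosses over this, and your observation that the SDDM structure supplies it---every intermediate matrix is SDDM by the appendix lemma, and an invertible SDDM matrix has strictly positive diagonal, so each pivot $\mat{D}'_{x_j, x_j}$ is nonzero---is exactly the right repair in the paper's setting. One small imprecision: your closing claim that invertibility of $\mat{M}_{V \setminus T, V \setminus T}$ is \emph{exactly} the condition making the iterated single-vertex elimination well defined is too strong in general (that block can be invertible while a leading pivot vanishes, as in the antidiagonal example above); it is sufficient for the \emph{block} Schur complement, while the pivot-by-pivot version needs the stronger nonvanishing-pivot condition, which SDDM-ness provides here. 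This does not affect the validity of your proof for the setting in which the paper uses the fact.
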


\begin{proof}[Proof of Theorem~\ref{lem:SCGuarantees}]
	Recall that the \text{NetMF} of $G$ is the
	SVD factorization of an entry-wise truncated logarithm of the random walk matrix
	$\sum_{i = 1}^{W} \theta_{i} \mat{D}^{-1} ( \mat{A} \mat{D}^{-1})^{i}$. Substituting in our choices of $\theta_i = \theta^{i}$ and since $W$ tends to infinity, we get that this matrix is the inverse of the SDDM matrix $\mat{D} - \theta \mat{A}$. Concretely, we have
	\begin{align}
	\label{eq: infG}
	\lim_{W \rightarrow \infty}
	\sum_{i = 1}^{W} \theta^{i}  \left( \mat{A} \mat{D}^{-1} \right)^{i}
	& = \mat{D}^{-1} \sum_{i=0}^{\infty} \left(\theta \mat{A} \mat{D}^{-1}\right)^i - \mat{D}^{-1} \nonumber \\
	& = \mat{D}^{-1} \left(\mat{I} - \theta \mat{A} \mat{D}^{-1}\right)^{-1} - \mat{D}^{-1} \nonumber \\
	& = \left (\mat{D} - \theta \mat{A} \right)^{-1} - \mat{D}^{-1}.
	\end{align}
	By Fact~\ref{fact:SchurInverse}, we know that the Schur complement
	exactly preserves the entries among vertices in $T$ in the inverse, i.e., \begin{equation}  
\label{eq: terminalInverse}	
\SCf\left(\mat{D}-\theta \mat{A},T \right)^{-1} = \left(\mat{D}-\theta \mat{A} \right)^{-1}_{T,T}. 
\end{equation}
	Furthermore, the definition of \text{NetMFSC} in Eq.~(\ref{eq:NetMFSC}) performs diagonal adjustments and thus ensures that the matrices being factorized are exactly the same. Formally, we have
	\begin{align}
	\label{eq: infH}
	 \lim_{W \rightarrow \infty} \sum_{i = 1}^{W} \mat{D}_H'^{-1} \left( \mat{A}_H  \mat{D}_H'^{-1} \right)^{i}   + \mat{D}_H'^{-1} - \mat{D}^{-1}_{T,T}
	 &  =  \mat{D}_H'^{-1}\sum_{i = 0}^{\infty} \left(\mat{A}_H \mat{D}_H'^{-1} \right)^i - \mat{D}^{-1}_{T,T} \nonumber \\
	 & = \mat{D}_H'^{-1}(\mat{I} -\mat{A}_H \mat{D}_H'^{-1})^{-1} - \mat{D}^{-1}_{T,T} \nonumber \\
	 & = \SCf(\mat{D}- \theta \mat{A},T)^{-1} - \mat{D}^{-1}_{T,T} \nonumber  \\
    & \hspace{-0.22cm}\stackrel{\text{Eq. }(\ref{eq: terminalInverse})}{=} (\mat{D} - \theta \mat{A})_{T,T}^{-1} - \mat{D}^{-1}_{T,T}.
    \end{align}
Since the matrices in Eq.~(\ref{eq: infG}) and~(\ref{eq: infH}) are the same when restricted to the terminal set $T$, we get that their factorizations are also the same up to a rotation, which in turn implies that $\mat{R}(G)_{T,T} = \mat{R}(H)_{T,T}$ up to rotation. \qedhere
%
%
\end{proof}

\section{Experiments}
\label{sec:Experiments}

In this section, we investigate how the vertex sparsifiers \SC and \RC affect
the predictive performance of graph embeddings for two different learning tasks.
Our multi-label vertex classification experiment builds on the framework
for NetMF~\citep{QiuDMLWT18}, and evaluates the accuracy of
logistic regression models that use graph embeddings obtained by first
coarsening the networks.
Our link prediction experiment
builds on the setup in node2vec~\citep{GroverL2016},
and explores the effect of vertex sparsification on AUC
scores for several popular link prediction baselines.

\subsection{Multi-label Vertex Classification}

\paragraph{Datasets.}
The networks we consider and their statistics are listed in Table~\ref{table:datasets}.
\blogcatalog~\citep{tang2009relational} models the social relationships
of online bloggers, and its vertex labels represent topic categories of the authors.
\flickr~\citep{tang2009relational} is a network of user contacts on
the image-sharing website Flickr, and its labels represent groups interested
in different types of photography.
\youtube
\citep{yang2015defining} is a social network on users of the popular
video-sharing website, and its labels are user-defined groups with
mutual interests in video genres.
We only consider the largest connected component of the \youtube network.

{
\vspace{-0.4cm}
\setlength{\tabcolsep}{4pt} 
\renewcommand{\arraystretch}{1} 
\begin{table}[H]
	\caption{Statistics of the networks in our vertex classification experiments.}
	\label{table:datasets}
  \vspace{0.2cm}
	\centering
	\begin{tabular}{lrrrrrrrrr}
		\toprule
    Dataset & Nodes & Edges & Classes & Labels \\
		\midrule
    \blogcatalog & 10,312 & 333,983 & 3,992 & 14,476 \\
    \flickr & 80,513 & 5,899,882 & 195 & 107,741 \\
    \youtube & 1,134,890 & 2,987,624 & 47 & 50,669 \\
    \vspace{-1.00cm}
	\end{tabular}
\end{table}
}

\paragraph{Evaluation Methods.}
We primarily use the embedding algorithm NetMF~\citep{QiuDMLWT18}, which unifies
LINE~\citep{tang2015line} and DeepWalk~\citep{PerozziAS14} via a matrix
factorization framework. LINE corresponds to NetMF when the window size equals 1,
and DeepWalk corresponds to NetMF when the window size is greater than 1.
We use the one-vs-all logistic
regression model implemented in scikit-learn~\citep{JMLR:Pedregosa2011} to 
investigate the quality of our vertex sparsifiers
for the multi-label vertex classification task. 
For each dataset and embedding algorithm, we compute the embeddings
of the original network and the two sparsified networks given by \SC and \RC.
Then for each of these embeddings, we evaluate the model
at increasing training ratios using the prediction pipeline in the NetMF
experiments~\citep{QiuDMLWT18}.

Since all of the nodes in \blogcatalog and \flickr are labeled, we downsample
the training set by randomly selecting half of the vertices and completely
discarding their labels. This induces a smaller label set which we use for 
both training and evaluation. The \youtube network is already sparsely labeled,
so we do not modify its training set.
We refer to the labeled vertices as terminals and prohibit the
sparsification algorithms from eliminating these nodes.
In all of our experiments, we use the minimum degree threshold $\Delta = 30$
for the \SC and \RC algorithms.
We choose the conventional target dimension of $d=128$ for all graph embeddings.
For LINE embeddings, we run NetMF with window size $W=1$.
For DeepWalk embeddings, we run \NetMF with the window size $W=10$ in the \blogcatalog and \flickr experiments,
and we use the window size $W = 2$ for the \youtube network
because of the density of the resulting random walk matrix.
To further study DeepWalk embeddings for the YouTube network,
we compare our results with the
novel embedding algorithm NetSMF~\citep{QiuDMLWWT19}, which is leverages
an intermediate spectral sparsifier for the
dense random walk matrix.
We use the window size $W=10$ in all instances of NetSMF.

For the \blogcatalog experiments, we vary the training ratio from $10\%$ to $90\%$,
and for \flickr and \youtube we vary the training ratio from $1\%$ to $10\%$.
In all instances, we perform 10-fold cross validation
and evaluate the prediction accuracy in terms of the mean Micro-F1 and Macro-F1 scores.
All of our experiments are performed on a Linux virtual machine with
64 Xeon E5-2673 v4 virtual CPUs (2.30GHz),
432GB of memory,
and a 1TB hard disk drive.

\begin{figure*}
  \centering
  \includegraphics[width=0.9\textwidth]{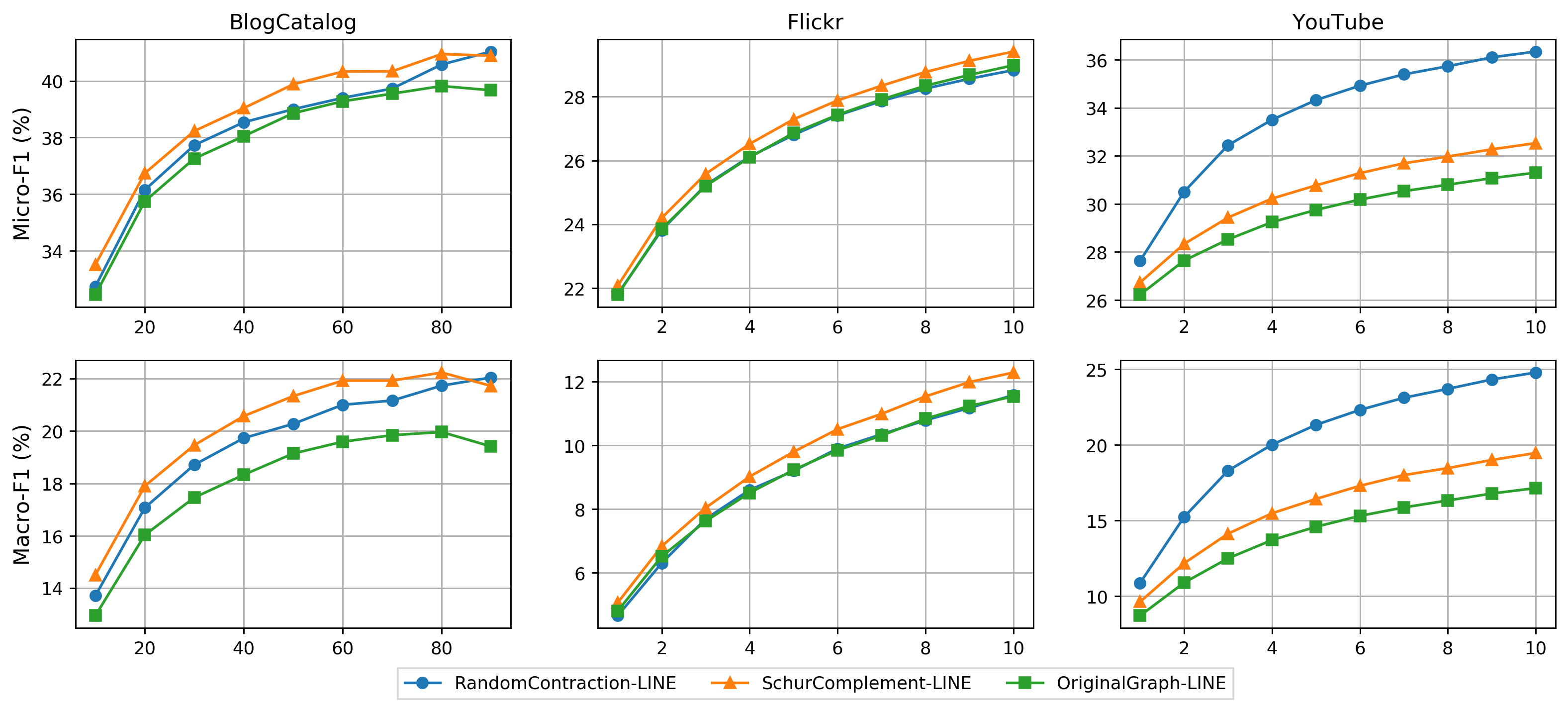}
  \caption{Accuracy of sparsified \LINE embeddings.
  The $x$-axis denotes the training ratio~$(\%)$
  and the $y$-axis in the top and bottom row denotes the mean
  Micro-F1 and Macro-F1 scores, respectively.}
  \label{fig:f1-line}
  \vspace{-0.25cm}
\end{figure*}

\paragraph{Results for \LINE (\NetMF with $W=1$).}
We start by evaluating the classification performance
of LINE embeddings of the sparsified networks relative to the
originals and plot the results across all datasets in Figure~\ref{fig:f1-line}.
Our first observation is that quality of the embedding for
classification always improves by running the \SC sparsifier.
To explain this phenomenon, we note that \LINE computes an
embedding using length $W=1$ random walks. This approach, however, is often
inferior to methods that use longer random walks such as \deepwalk.
When eliminating vertices of a graph using the Schur complement, the resulting
graph perfectly preserves random walk transition probabilities through the
eliminated vertex set with respect to the original graph.
Thus, the \LINE embedding of a graph sparsified using \SC implicitly
captures longer length random walks through low-degree vertices and
hence more structure of the network.
For the \youtube experiment, we observe that \RC substantially outperforms
\SC and the baseline \LINE embedding. We attribute this behavior to the fact that
contractions preserve edge sparsity unlike Schur complements.
It follows that \RC typically eliminates more nodes than \SC when given a
degree threshold.
In this instance, the \youtube network sparsified by \SC
has 84,371 nodes while \RC produces a network with 53,291 nodes
(down from 1,134,890 in the original graph).

\begin{figure*}
  \centering
  \includegraphics[width=0.9\textwidth]{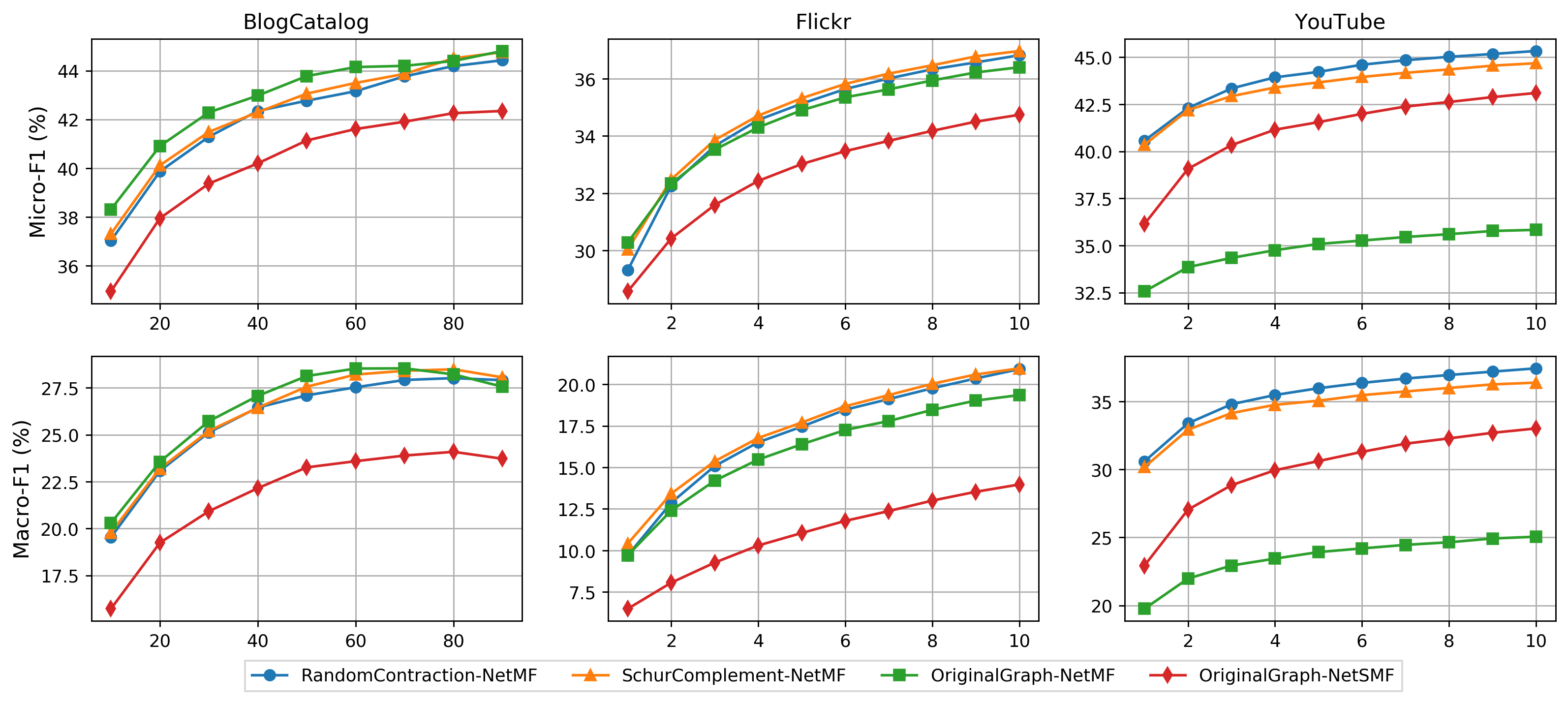}
  \caption{Accuracy of sparsified \deepwalk embeddings.
  The $x$-axis denotes the training ratio~$(\%)$
  and the $y$-axis in the top and bottom row denotes the mean
  Micro-F1 and Macro-F1 scores, respectively.}
  \label{fig:f1-netmf}
\end{figure*}

\paragraph{Results for \deepwalk (\NetMF with $W \ge 2$).}
Now we consider the same
classification experiments using \deepwalk and NetSMF embeddings. 
We plot the prediction performance for various
training ratios across all datasets in Figure~\ref{fig:f1-netmf}.
Again, we observe that the vertex-sparsified embeddings perform at least as well as the
embedding of the original graph for this multi-label classification task, which
we attribute to the implicit use of longer random walks.
In the \youtube experiment we observe a dramatic improvement over the
baseline, but this is because of an entirely different reason than before.
A core subroutine of \NetMF with $W \ge 2$ is computing the truncated SVD of a
dense random walk matrix graph, so the benefits of vertex
sparsification surface in two ways.
First, the bottleneck in the runtime of the classification pipeline is
the truncated SVD. By preprocessing the graph to reduce its vertex set we
noticeably speed up this SVD call
(e.g., see YouTube and NetMF in Table~\ref{table:runtimes}).
Second, the convergence rate of the approximate
SVD depends on the dimension of the underlying matrix, so the sparsified graphs
lead to more accurate eigendecompositions and hence higher quality embeddings.
We reiterate that for the \youtube experiment, we set $W=2$ to meet a 432GB
memory constraint whereas in the \deepwalk experiments in~\citet{PerozziAS14}
the authors set $W=10$.
We also run NetSMF with $W=10$ on the original graphs as another benchmark,
but in some instances we need to use fewer samples to satisfy our memory limit,
hence the lower accuracies than in~\citet{QiuDMLWWT19}.
When trained on $10\%$ of the label set, \SC achieves
24.45\% and 44.67\% relative gains over \deepwalk in terms of Micro-F1 and
Macro-F1 scores. Furthermore, since \RC yields a coarser graph on fewer nodes, it
gives 26.43\% and 48.94\% improvements relative to \deepwalk.

{
\setlength{\tabcolsep}{3.8pt} 
\renewcommand{\arraystretch}{1} 
\begin{table}[H]
  \caption{Running times of the coarsening and graph embedding stages in
  the vertex classification experiment (seconds).}
  \vspace{0.2cm}
	\label{table:runtimes}
	\centering
	\begin{tabular}{lrrrrrrrrr}
		\toprule
    Network & Sparsify & LINE & NetMF & NetSMF \\
		\midrule
    \blogcatalog & -- & 3.72 & 20.80 & 45.56 \\
    \blogcatalog SC & 1.53 & 3.88 & 15.66 & -- \\
    \blogcatalog RC & 2.47 & 3.83 & 15.61 & -- \\
    \hline
    \flickr    & -- & 51.62 & 1,007.17 & 950.60 \\
    \flickr SC & 19.43 & 56.62 & 571.70 & -- \\
    \flickr RC & 59.43 & 43.41 & 597.08 & -- \\
    \hline
    \youtube   & -- & 147.55 & 4,714.88 & 3,458.75 \\
    \youtube SC & 22.91 & 44.84 & 2,053.59 & -- \\
    \youtube RC & 44.13 & 16.55 & 909.16 & -- \\
    \vspace{-0.75cm}
	\end{tabular}
\end{table}
}

\subsection{Link Prediction}

\paragraph{Datasets.}

We build on the experimental framework
in node2vec~\citep{GroverL2016}
and evaluate our vertex sparsification algorithms on the following datasets:
Facebook~\citep{snapnets},
arXiv ASTRO-PH~\citep{snapnets},
and Protein-Protein Interaction (PPI)~\citep{stark2006biogrid}.
We present the statistics of these networks in Table~\ref{table:link-datasets}.
Facebook is a social network where nodes represent users
and edges represent a friendship between two users.
The arXiv graph is a collaboration network generated from papers submitted
to arXiv. Nodes represent scientists and an edge is present between two
scientists if they have coauthored a paper.
In the PPI network for Homo Sapiens,
nodes represent proteins and edges
indicate a biological interaction between a pair of proteins.
We consider the largest connected component of the arXiv and PPI graphs.

{
\vspace{-0.4cm}
\setlength{\tabcolsep}{4pt} 
\renewcommand{\arraystretch}{1} 
\begin{table}[H]
	\caption{Statistics of the networks in our link prediction experiments.}
	\label{table:link-datasets}
  \vspace{0.2cm}
	\centering
	\begin{tabular}{lrrrrrrrrr}
		\toprule
    Dataset & Nodes & Edges \\
		\midrule
    Facebook & 4,039 & 88,234 \\
    arXiv ASTRO-PH & 17,903 & 196,972 \\
    Protein-Protein Interaction (PPI) & 21,521 & 338,625 \\
    \vspace{-0.75cm}
	\end{tabular}
\end{table}
}

\paragraph{Evaluation Methods.}
In the \emph{terminal} link prediction task,
we are given a graph and a set of terminal nodes.
A subset of the terminal-to-terminal edges are removed,
and the goal is to accurately predict edges and non-edges
between terminal pairs in the original graph.
We generate the labeled dataset of edges as follows:
first, randomly select a subset of terminal nodes;
to obtain positive examples, remove 50\% of the edges chosen
uniformly at random between terminal nodes
while ensuring that the network is still connected;
to obtain negative examples, randomly choose an equal number of
terminal pairs that are not adjacent in the original graph.
We select 500 (12.4\%) nodes as terminals in the Facebook network,
2000 (9.3\%) in PPI, and 4000 (22.3\%) in arXiv.

For each network and terminal set, we use \RC and \SC to coarsen the graph.
Then we compute node embeddings using \LINE and \NetMF.
We calculate an embedding for each edge $(u,v)$ by taking the Hadamard
or weighted L2 product of the node embeddings for $u$ and $v$~\citep{GroverL2016}.
Finally, we train a logistic regression model using the edge embeddings as
features, and report the area under the receiver operating characteristic curve
(AUC) from the prediction scores.

{
\setlength{\tabcolsep}{3.8pt} 
\renewcommand{\arraystretch}{1} 
\begin{table}
  \caption{Area under the curve (AUC) scores for
  different operators,
  coarsening, and embedding algorithms for the link prediction task.}
	\label{table:auc_scores}
  \vspace{0.2cm}
	\centering
	\begin{tabular}{llrrrrrrrr}
		\toprule
    Operator & Algorithm & Facebook & arXiv & PPI\\
		\midrule
    & LINE & 0.9891 & 0.9656 & 0.9406 \\
    & RC + LINE & 0.9937 & 0.9778 & \textbf{0.9431} \\
    Hadamard & SC \hspace{0.03cm}+ LINE & \textbf{0.9950} & \textbf{0.9854} & 0.9418 \\
    & NetMF & 0.9722 & 0.9508 & 0.8558 \\
    & RC + NetMF & 0.9745 & 0.9752 & 0.9072 \\
    & SC \hspace{0.03cm}+ NetMF & 0.9647 & 0.9811 & 0.9018 \\
		\midrule
    & LINE & 0.9245 & 0.6129 & 0.7928 \\
    & RC + LINE & 0.9263 & 0.6217 & 0.7983 \\
    Weighted L2 & SC \hspace{0.03cm}+ LINE & 0.9523 & 0.6824 & 0.7835 \\
    & NetMF & 0.9865 & 0.9574 & 0.8646 \\
    & RC + NetMF & 0.9852 & 0.9800 & 0.9207 \\
    & SC \hspace{0.03cm}+ NetMF & 0.9865 & 0.9849 & 0.9120 \\
    \vspace{-1.00cm}
	\end{tabular}
\end{table}
}

\paragraph{Results.}
We summarize our results in Table~\ref{table:auc_scores}. For all of the 
datasets, using \RC or \SC for coarsening and \LINE with Hadamard products for
edge embeddings gives the best results.
Moreover, coarsening consistently outperforms the baseline
(i.e., the same network and terminals without any sparsification).
We attribute the success of \LINE-based embeddings in this experiment
to the fact that our coarsening algorithms preserve random walks through the
eliminated nodes; hence, running \LINE on a coarsened graph implicitly uses
longer-length random walks to compute embeddings.
We see the same behavior with coarsening and \NetMF, but the resulting AUC
scores are marginally lower.
Lastly, our experiments also highlight the importance of choosing the right
binary operator for a given node embedding algorithm.


\section{Conclusion}
We introduce two vertex sparsification algorithms based on
Schur complements to be used as a preprocessing routine when computing
graph embeddings of large-scale networks.
Both of these algorithms repeatedly choose a vertex to remove and add
new edges between its neighbors.
In Section~\ref{sec:Guarantees} we demonstrate that these algorithms exhibit provable trade-offs
between their running time and approximation quality.
The \RC algorithm is faster because it contracts the eliminated
vertex with one of its
neighbors and reweights all of the edges in its neighborhood,
while the \SC algorithm adds a
weighted clique between all pairs of neighbors of the eliminated vertex
via Gaussian elimination.
We prove that the random contraction based-scheme produces a graph that is the
same in expectation as the one given by Gaussian elimination, which in turn
yields the matrix factorization that random walk-based graph embeddings
such as DeepWalk, NetMF and NetSMF aim to approximate.

The main motivation for our techniques is that Schur complements preserve
random walk transition probabilities through eliminated vertices, which we
can then exploit by factorizing smaller matrices on the terminal set of vertices.
We demonstrate on commonly-used benchmarks for graph embedding-based
multi-label vertex classification tasks
that both of these algorithms empirically improve the
prediction accuracy compared to using
graph embeddings of the original and unsparsified networks,
while running in less time and using substantially less memory.


\vspace{-0.30cm}
\section*{Acknowledgements}
MF did part of this work while supported by
an NSF Graduate Research Fellowship under grant DGE-1650044 at the Georgia
Institute of Technology.
SS and GG are partly supported by an NSERC Discovery grant awarded to
SS by NSERC (Natural Sciences and Engineering Research Council of Canada).
RP did part of this work while at Microsoft Research Redmond,
and is partially supported by the NSF under grants CCF-1637566 and CCF-1846218.

\vspace{-0.30cm}
\bibliography{references}

\newpage
\appendix

\section{Closure of SDDM Matrices Under the Schur Complement}
\begin{lemma} If $\mat{M}$ is an SDDM matrix and $T = V \setminus \{x\}$ is a subset of its columns, then $\mat{S} := \mat{SC}(\mat{M},T)$ is also an SDDM matrix.
\end{lemma}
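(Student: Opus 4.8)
The plan is to verify the three defining properties of an SDDM matrix directly from the one-step Schur complement formula in Eq.~\eqref{eq:SC_SDD}, i.e.\ $\mat{S} = \mat{M}_{T,T} - \mat{M}_{T,x}\mat{M}_{T,x}^{\top}/\mat{D}'_{x,x}$, where I recall that $\mat{D}'_{x,x} = \mat{M}_{x,x} > 0$ for any non-isolated vertex $x$. Entrywise, for $u,v \in T$ this reads $\mat{S}_{uv} = \mat{M}_{uv} - \mat{M}_{ux}\mat{M}_{xv}/\mat{M}_{xx}$, and I would check symmetry, off-diagonal non-positivity, and diagonal dominance in turn.

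Symmetry is immediate, since $\mat{M}_{T,T}$ is symmetric and $\mat{M}_{T,x}\mat{M}_{T,x}^{\top}$ is a symmetric rank-one matrix scaled by the positive scalar $\mat{D}'_{x,x}$. For off-diagonal non-positivity, fix $u \neq v$ in $T$: we have $\mat{M}_{uv} \le 0$, while $\mat{M}_{ux}$ and $\mat{M}_{xv}$ are off-diagonal entries of $\mat{M}$ (as $u,v \neq x$) and hence both non-positive, so their product is non-negative. Dividing by $\mat{M}_{xx} > 0$ and subtracting yields $\mat{S}_{uv} \le \mat{M}_{uv} \le 0$.

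The only substantive step is diagonal dominance. Since the off-diagonal entries of $\mat{S}$ are non-positive, diagonal dominance is equivalent to every row sum of $\mat{S}$ being non-negative, which is the form I would verify. Writing $r_i := \sum_{j \in V}\mat{M}_{ij} = \mat{D}^s_{i,i} \ge 0$ for the $i$-th row sum of $\mat{M}$ (the Laplacian part $\mat{L}$ contributes zero, leaving only the slack), I would compute the $u$-th row sum of $\mat{S}$ as $\sum_{v \in T}\mat{S}_{uv} = (r_u - \mat{M}_{ux}) - (\mat{M}_{ux}/\mat{M}_{xx})(r_x - \mat{M}_{xx})$. The two copies of $\mat{M}_{ux}$ cancel, leaving $r_u - \mat{M}_{ux}r_x/\mat{M}_{xx}$. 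Since $r_u, r_x \ge 0$, $\mat{M}_{xx} > 0$, and $\mat{M}_{ux} \le 0$, this quantity is at least $r_u \ge 0$, which establishes diagonal dominance.

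The main obstacle is precisely this last computation: the clean route is to reformulate diagonal dominance as row-sum non-negativity and to exploit that the row sums of $\mat{M}$ equal its slacks, rather than bounding each diagonal against a sum of off-diagonal magnitudes directly. The one caveat to flag is the requirement $\mat{M}_{xx} > 0$; if instead $\mat{M}_{xx} = 0$, then diagonal dominance of $\mat{M}$ forces the entire $x$-th row and column to vanish, so $x$ is isolated and its elimination leaves $\mat{M}_{T,T}$ unchanged, which is trivially SDDM.
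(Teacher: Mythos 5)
Your proof is correct, and while it verifies the same three defining properties as the paper, the crucial diagonal-dominance step is handled by a genuinely different route. The paper works directly with the dominance inequalities: it expands $-\sum_{j \neq i} \mat{S}_{ij}$, bounds the coupling term via the $x$-th row's dominance (yielding $\mat{M}_{ix}\bigl(\sum_{j \neq i}\mat{M}_{xj}\bigr) \leq -\mat{M}_{ix}^2$), and then invokes the $i$-th row's dominance to conclude $-\sum_{j\neq i}\mat{S}_{ij} \leq \mat{M}_{ii} - \mat{M}_{ix}^2/\mat{M}_{xx} = \mat{S}_{ii}$. You instead exploit the decomposition $\mat{M} = \mat{L} + \mat{D}^s$ to get the exact identity that the $u$-th row sum of $\mat{S}$ equals $r_u - \mat{M}_{ux}r_x/\mat{M}_{xx}$, where $r_i = \mat{D}^s_{i,i}$ is the slack; non-negativity is then a sign inspection. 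Your version buys two things: an exact formula rather than a chain of inequalities (it shows the slack of each neighbor can only increase under elimination, which is consistent with Line~\ref{line:slack} of Algorithm~\ref{alg:schur}), and an explicit treatment of the degenerate case $\mat{M}_{xx} = 0$, which the paper silently assumes away by dividing by $\mat{D}'_{x,x}$. The paper's version buys slightly more generality in presentation, since it never needs to invoke the Laplacian-plus-slack decomposition, only the dominance inequalities themselves; but with the paper's definition of dominance ($\mat{M}_{ii} \geq -\sum_{j\neq i}\mat{M}_{ij}$, i.e., non-negative row sums) the two formulations are equivalent, so this is a matter of taste rather than substance.
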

\begin{proof}
	Recall that 
	\[
	\mat{SC}(\mat{M},T) = \mat{M}_{T,T} - \frac{\mat{M}_{T,x}\mat{M}^{\top}_{T,x}}{\mat{D}'_{x,x}},
	\]
	and observe that $\mat{D}'_{x,x} = \mat{M}_{x,x}$. By definition of SDDM matrices, we need to show that $\mat{S}$ is (i) symmetric, (ii) its off-diagonal entries are non-positive, and (iii) for all $i \in [n-1]$ we have $\mat{S}_{ii} \geq - \sum_{j \neq i} \mat{S}_{ij}$. An easy inspection shows that $\mat{S}$ satisfies (i) and (ii). We next show that (iii) holds. 
	
	To this end, by definition of $\mat{S}$, we have that
	\begin{align}
	\label{eq: Soff-diag}
	-\sum_{j \neq i} \mat{S}_{ij} & = \sum_{j \neq i} \left( -\mat{M}_{ij} + \frac{\mat{M}_{ix} \mat{M}_{xj}}{\mat{M}_{xx}} \right) \nonumber \\
	& = -\sum_{j \neq i} \mat{M}_{ij} + \frac{\mat{M}_{ix}}{\mat{M}_{xx}} \left( \sum_{j \neq i} \mat{M}_{xj} \right)  
	\end{align}	
	
	As $\mat{M}$ is an SDDM matrix, the following inequality holds for the $x$-th row of $\mat{M}$
	\[
	-\sum_{j \neq i} \mat{M}_{xj} \leq \mat{M}_{ix},
	\] 
	or equivalently
	\begin{equation}
	\label{eq: helpful} 
	\mat{M}_{ix} \left(\sum_{j \neq i} \mat{M}_{xj} \right) \leq -\mat{M}^2_{ix}.
	\end{equation}
	
	Plugging Eq.~(\ref{eq: helpful}) in Eq.~(\ref{eq: Soff-diag}) and using the fact that $-\sum_{j \neq i} \mat{M}_{ij} \leq \mat{M}_{ii}$, we get that
	\begin{align*}
	-\sum_{j \neq i} \mat{S}_{ij} & \leq \mat{M}_{ii} - \frac{\mat{M}^2_{ix}}{\mat{M}_{xx}} = \mat{S}_{ii},
	\end{align*}
	which completes the proof of the lemma.
\end{proof}

\end{document}